\documentclass[10pt,twocolumn,letterpaper]{article}

\usepackage{iccv}
\usepackage{times}
\usepackage{epsfig}
\usepackage{graphicx}
\usepackage{amsmath}
\usepackage{amssymb}
\usepackage{graphicx}
\usepackage[table]{xcolor}%
\usepackage{algorithm,algpseudocode}
\usepackage[normalem]{ulem}
\usepackage[utf8]{inputenc} 
\usepackage[T1]{fontenc}    
\usepackage{url}            
\usepackage{booktabs}       
\usepackage{amsfonts}       
\usepackage{nicefrac}       
\usepackage{microtype}      
\usepackage{xcolor}         
\usepackage{enumitem}
\usepackage[english]{babel}
\usepackage{amsthm}
\usepackage{wrapfig}
\usepackage{subfig}  
\usepackage{multirow}
\makeatletter
\def\hlinewd#1{%
\noalign{\ifnum0=`}\fi\hrule \@height #1 \futurelet
\reserved@a\@xhline}
\makeatother

\newtheorem{theorem}{Theorem}
\newtheorem{proposition}{Proposition}


\usepackage[pagebackref=true,breaklinks=true,letterpaper=true,colorlinks,bookmarks=false]{hyperref}

\iccvfinalcopy 


\ificcvfinal\pagestyle{empty}\fi

\begin{document}

\title{
Do We Really Need a Large Number of Visual Prompts?
}



\author{Youngeun Kim\\
Yale University\\
{\tt\small youngeun.kim@yale.edu}
\and
Yuhang Li\\
Yale University\\
{\tt\small yuhang.li@yale.edu}
\and
Abhishek Moitra \\
Yale University\\
{\tt\small abhishek.moitra@yale.edu}
\and
Ruokai Yin \\
Yale University\\
{\tt\small ruokai.yin@yale.edu}
\and
Priyadarshini Panda\\
Yale University\\
{\tt\small priya.panda@yale.edu}
}

\maketitle
\ificcvfinal\thispagestyle{empty}\fi

\begin{abstract}
   Due to increasing interest in adapting models on resource-constrained edges, parameter-efficient transfer learning has been widely explored. 
   Among various methods, Visual Prompt Tuning (VPT), prepending learnable prompts to input space, shows competitive fine-tuning performance compared to training of full network parameters.
   However, VPT increases the number of input tokens, resulting in additional computational overhead.
   In this paper, we analyze the impact of the number of prompts on fine-tuning performance and self-attention operation in a vision transformer architecture. Through theoretical and empirical analysis we show that adding more prompts does not lead to linear performance improvement.
   Further, we propose a Prompt Condensation (PC) technique that aims to prevent performance degradation from using a small number of prompts. 
   We validate our methods on FGVC and VTAB-1k tasks and show that our approach reduces the number of prompts by $\sim$70\% while maintaining accuracy.
\end{abstract}

\section{Introduction}

Parameter-Efficient Transfer Learning (PETL) has become a popular approach in various domains as it enables fine-tuning pre-trained models with minimal memory usage on resource-constrained edge devices \cite{rebuffi2018efficient, zhang2020side, zhang2021tip, zhou2022learning, he2022parameter,hu2021lora}. In PETL, a large model with billions of parameters, such as a transformer \cite{dosovitskiy2020image,vaswani2017attention}, is first trained on a massive dataset on a cloud server, and then fine-tuned with limited computational/memory resources on edge devices. 
Among various PETL methods, Visual Prompt Tuning (VPT) \cite{jia2022visual} is promising due to its ability to update a small subset of parameters while achieving higher accuracy than other methods.
Technically, VPT introduces learnable prompt tokens, which are prepended to the input or intermediate image patch tokens.

\begin{table}[t]
   \centering
\small
\resizebox{0.47\textwidth}{!}{%
\begin{tabular}{l|ccccc}
\hlinewd{1pt}
\# Prompts (ViT-B/16 \cite{dosovitskiy2020image}) & 0 & 50 & 100 &150 &200  \\
\hline
 GFLOPs & 17.6 & 22.2 & 26.9 & 31.8 & 36.7 \\
 Computational Overhead & 0\% & 26.1\% & 52.8\% & 80.6\%  & 108.5\% \\
 \hline
  \hline
 \# Prompts (Swin-B \cite{liu2021swin}) & 0 & 5 & 10 & 25 & 50  \\
 \hline
 GFLOPs & 15.4 & 16.3 & 17.2 & 19.8 & 24.3 \\
 Computational Overhead & 0\% & 5.8\% & 11.6\% & 28.5\% & 57.8\%\\
\hlinewd{1pt}
\end{tabular}%
}
\vspace{-2mm}
\caption{ The increase of floating-point operations (FLOPs) with respect to the number of prompts in VPT \cite{jia2022visual}. 
}
\label{table:intro:Flops_vs_prompt}
  \vspace{-1mm}
\end{table}

\begin{figure}[t]
\begin{center}
\centering
\def\arraystretch{0.5}
\begin{tabular}{@{\hskip -0.02\linewidth}c@{\hskip -0.01\linewidth}c@{\hskip -0.01\linewidth}c@{\hskip -0.01\linewidth}c@{}c@{}c@{}c}
\includegraphics[width=0.34\linewidth]{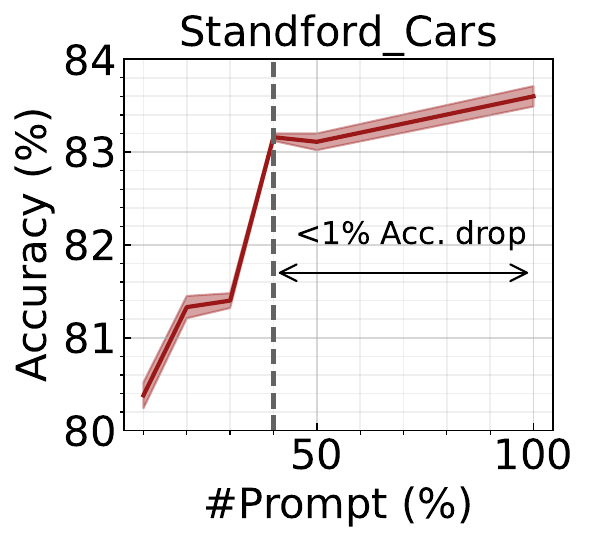} &
\includegraphics[width=0.34\linewidth]{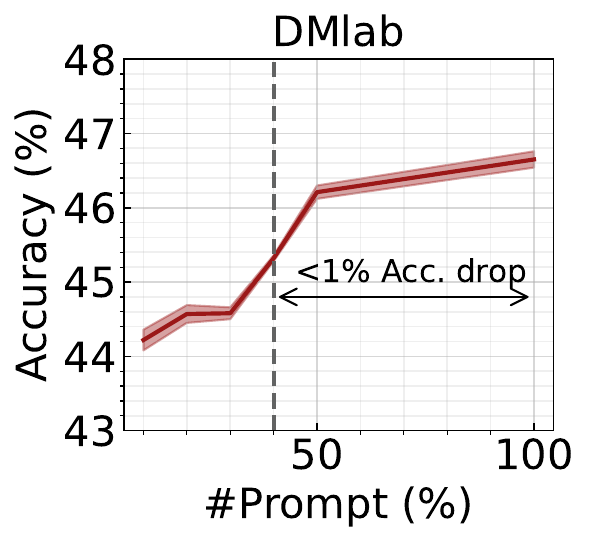} &
\includegraphics[width=0.34\linewidth]{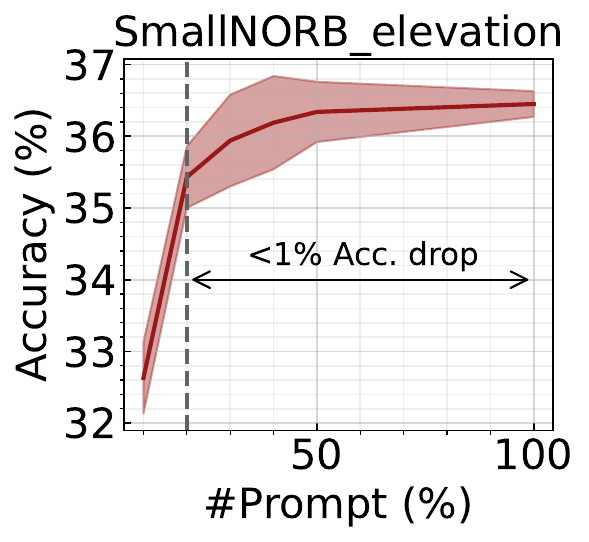} 
\end{tabular}
\end{center}
\vspace{-6mm}
\caption{ 
 Accuracy depending on the number of prompts  used for VPT training. We transfer an ImageNet-22k pre-trained ViT-B/16 \cite{dosovitskiy2020image} to three downstream tasks. The x-axis shows the relative number of prompts compared to the original number reported in \cite{jia2022visual}. The vertical dotted line indicates the point where there is < 1\% drop in accuracy from 100\% number of prompts. }
 \vspace{-1mm}
\label{fig:intro:performance}
\end{figure}

While VPT can induce memory efficiency, the use of additional prompt tokens leads to increased computational costs from self-attention and linear layers \cite{liu2021swin,yu2021unified,dosovitskiy2020image}. We report FLOPs with respect to the number of prompts in Table \ref{table:intro:Flops_vs_prompt}, which shows that the computational cost of VPT significantly increases as the number of prompts increases. If 200 prompts are prepended to the input space of ViT-B, the computational overhead (\ie FLOPs) almost doubles compared to the model without any prompts.
This indicates there is an inevitable trade-off between the number of prompts and computational cost in VPT.

Given such a trade-off, it is natural to ask: \textit{How does the fine-tuning performance change with respect to the number of prompts?}
To find the answer, we measure the test accuracy with respect to the number of prompts.
Interestingly, as shown in Fig. \ref{fig:intro:performance}, we found that reducing the number of prompts for VPT training by approximately 50\% does not lead to a significant drop, and most of the performance drop happens in the $10\% \sim 40\%$ range. 
The results imply that the correlation between the number of prompts and fine-tuning accuracy is not linear.

To further provide a better understanding of the prompts in VPT, we analyze the impact of the number of prompt tokens on fine-tuning accuracy by addressing several questions:
\textit{Why does the number of prompts and the fine-tuning performance have a non-linear correlation? How does the number of prompts affect the self-attention operation? If there is a performance drop with less number of prompts, how can we recover the accuracy drop?}
We provide both empirical and mathematical analysis to answer such questions.
This can provide insights into the behavior of the VPT model and its self-attention mechanism, which can help researchers better understand VPT and potentially improve the prompt design.
At the same time, it is essential to analyze this impact on the computational cost to ensure that the method remains practical for deployment on extremely resource-constrained edge devices.

A noteworthy observation from Fig. \ref{fig:intro:performance} is that the performance degradation in $<50\%$ number of prompts regime is non-trivial.
To address this, we propose \textit{Prompt Condensation} (PC), a technique that reduces the number of prompt tokens with minimal accuracy drop.
The PC consists of three steps: (1) Computing the importance score for each prompt. Here, we propose a global metric for measuring the importance score of each prompt, which provides better accuracy compared to the local attention-based metrics \cite{liang2022not,rao2021dynamicvit,fayyaz2022adaptive}.
(2) Selecting the top $k\%$ prompts based on the importance score, and discard the remaining prompts.
(3) Fine-tuning the selected prompts while freezing other parameters.

In summary, our contributions can be as follows: 
\begin{itemize}
\item In a first-of-its-kind study, we analyze the impact of the number of visual prompt tokens on the fine-tuning accuracy and self-attention operation in VPT.
\item We find that the number of prompts is not linearly proportional to performance improvement. To support this, we provide empirical and mathematical analysis.
\item To recover the performance drop with a small number of prompts, we propose \textit{Prompt Condensation} (PC).
Our method can reduce the number of prompts by $\sim 70\%$ while maintaining performance. 
\end{itemize}

\section{Related Work}

\subsection{Parameter Efficient Transfer Learning (PETL)}
Efficient fine-tuning of large pre-trained models on edge devices has become a popular research topic due to its practicality and high performance \cite{rebuffi2018efficient, zhang2020side, zhang2021tip, zhou2022learning, he2022parameter,hu2021lora}. Rather than training the entire set of parameters in neural networks, researchers focus on how to use a small percentage of weights to maximize transfer performance. To this end, several approaches \cite{rusu2016progressive,sung2022vl,houlsby2019parameter, cai2020tinytl} insert a lightweight bottleneck module into the transformer model, allowing gradients to be calculated only for a small number of parameters. TinyTL \cite{cai2020tinytl} and BitFit \cite{zaken2021bitfit} propose to update the bias term to fine-tune the model. Other approaches \cite{zhang2020side, sung2022lst} add side networks that can be optimized while keeping the original large model frozen.
Another effective method to reduce memory requirements is to sparsify \cite{jiangback} or quantize activation \cite{chakrabarti2019backprop,chen2021actnn,fu2020don,evans2021ac} during backward gradient calculation. 
Recently, VPT \cite{jia2022visual} prepends trainable parameters to the input space of the pre-trained model, achieving similar (and sometimes even better) accuracy compared to full fine-tuning while optimizing only about $1\%$ of the parameters. However, adding a large number of prompts can significantly increase the computational overhead of the model. In this work, we analyze how the number of prompts affects fine-tuning performance.

\noindent\textbf{{Importance of our work.}}  Prompt tuning is one of the major research directions to fine-tune the large-scale pre-trained model. 
Considering that prompt learning is applied to various applications, we aim to improve the efficiency of the prompt tuning approach.
 Our objective differentiates from prior works \cite{chen2022vision,bahng2022exploring,chen2022adaptformeR,zhang2022neural,jie2022convolutional}, such as adapter-based or partial training methods, which primarily seek to enhance performance on downstream tasks with different approaches.
 Furthermore, given that our technique does not necessitate any modifications to the model architecture, it offers promising potential for extension in future prompt learning approaches.

\subsection{Token Sparsification}

The computational cost of ViT \cite{dosovitskiy2020image} increases as the number of tokens given to the model increases \cite{tay2022efficient}. To alleviate this issue, previous works aim to reduce the number of tokens \cite{fayyaz2022adaptive,rao2021dynamicvit,meng2022adavit, kong2022spvit,goyal2020power,yu2021unified,kim2020length,kim2021learned,song2022cp}. Liang \etal \cite{liang2022not} define the importance score of each token based on its similarity to a $[CLS]$ token. Rao \etal \cite{rao2021dynamicvit} propose a prediction module with Gumbel-Softmax to sparsify tokens, which is jointly trained with the model parameters. Meng \etal \cite{meng2022adavit} propose a decision network that can turn on/off heads and blocks in a transformer architecture. The authors of \cite{yin2022vit} propose an adaptive halting module that calculates a probability for each token to determine when to halt processing. However, these methods require updating the weight parameters inside the transformer or an additional module, which is challenging to apply to the PETL scenario. Recently, \cite{bolya2022token} proposed a token merging technique without training, gradually reducing the number of tokens in each block of vision transformers to speed up inference. However, their method will be difficult to apply for prompt tokens because prompt tokens are introduced at every layer.

\begin{table*}[t]
   \centering
\small
\resizebox{0.99\textwidth}{!}{%
\begin{tabular}{c|cccccccc}
\hlinewd{1pt}
\#Prompts$\setminus$Data   & CUB-200  & NABirds  & Stanford Dog &  Stanford Car &  CIFAR100 &  SVHN  &  EuroSAT &  Resisc45   \\
\hline
 100\% & $88.52_{\pm 0.09}$& $84.20_{\pm 0.05}$ & $90.22_{\pm 0.09}$ & $83.42_{\pm 0.11}$ & $78.51_{\pm 0.71}$ & $80.64_{\pm 1.28}$ & $96.41_{\pm 0.39}$ & $82.66_{\pm 1.54}$ \\
 50\% & $88.45_{\pm 0.04}$ & $84.21_{\pm 0.06}$ & $90.25_{\pm 0.05}$ & $83.22_{\pm 0.09}$ & $78.23_{\pm 0.84}$&  $80.66_{\pm 0.39}$ & $96.09_{\pm 0.08}$& $82.18_{\pm 0.11}$ \\
  40\% & $88.45_{\pm 0.10}$& $84.18_{\pm 0.02}$ & $90.21_{\pm 0.06}$ & $83.16_{\pm 0.04}$ &$77.87_{\pm 0.49}$& $80.58_{\pm 0.42}$ & $95.88_{\pm 0.40}$& $82.10_{\pm 0.97}$\\
   30\% & $88.49_{\pm 0.10}$& $84.16_{\pm 0.04}$ & $90.22_{\pm 0.06}$ & $81.90_{\pm 0.08}$ &$78.18_{\pm 0.98}$& $78.49_{\pm 1.65}$ & $95.88_{\pm 0.40}$& $82.53_{\pm 0.81}$\\
    20\% & $88.47_{\pm 0.09}$& $84.11_{\pm 0.04}$ & $90.22_{\pm 0.09}$ & $81.42_{\pm 0.12}$ &$78.08_{\pm 0.68}$& $79.08_{\pm 1.46}$ & $95.98_{\pm 0.27}$& $82.36_{\pm 0.40}$ \\
     10\% & $88.13_{\pm 0.11}$ & $84.13_{\pm 0.03}$ & $90.20_{\pm 0.07}$ & $80.76_{\pm 0.14}$ &$77.62_{\pm 0.23}$& $77.56_{\pm 0.89}$ & $95.90_{\pm 0.12}$& $81.21_{\pm 0.57}$\\
\hline
\hline
\#Prompts$\setminus$Data  & Clevr/count  & Clevr/dist & DMLab & KITTI/dist &  dSprites/loc &  dSprites/ori &  SmallNORB/azi &  SmallNORB/ele    \\
\hline
 100\% & $68.65_{\pm 1.24}$ & $59.05_{\pm 0.32}$ & $46.05_{\pm 0.33}$ & $72.89_{\pm 2.20}$ &$74.35_{\pm 2.80}$ & $48.09_{\pm 1.77}$ & $32.86_{\pm 0.84}$ & $36.46_{\pm 0.19}$ \\
 50\% & $68.49_{\pm 2.12}$ & $59.68_{\pm 0.60}$ & $46.21_{\pm 0.87}$ & $72.26_{\pm 1.38}$ &$72.26_{\pm 3.11}$& $47.50_{\pm 1.36}$ & $32.43_{\pm 0.28}$& $36.34_{\pm 0.42}$ \\
  40\% & $68.88_{\pm 1.70}$ & $59.45_{\pm 0.38}$ & $45.32_{\pm 0.50}$ & $72.32_{\pm 1.40}$ &$69.02_{\pm 4.05}$& $47.18_{\pm 0.41}$ & $32.22_{\pm 0.43}$& $36.19_{\pm 0.65}$ \\
   30\% & $66.40_{\pm 0.83}$ & $58.94_{\pm 0.20}$ & $44.58_{\pm 1.10}$ & $72.33_{\pm 1.29}$ &$67.48_{\pm 5.75}$& $47.37_{\pm 1.17}$ & $31.24_{\pm 0.76}$& $35.94_{\pm 0.64}$ \\
    20\% & $65.62_{\pm 3.23}$ & $58.94_{\pm 0.57}$ & $44.57_{\pm 0.88}$ & $72.14_{\pm 1.12}$ &$58.14_{\pm 6.26}$& $47.22_{\pm 0.73}$ & $29.29_{\pm 1.86}$& $35.43_{\pm 0.43}$ \\
     10\% & $60.49_{\pm 3.08}$ & $58.83_{\pm 0.21}$ & $44.22_{\pm 0.89}$ & $72.39_{\pm 1.30}$ &$52.26_{\pm 6.06}$& $44.46_{\pm 2.57}$ & $29.03_{\pm 1.50}$& $32.63_{\pm 0.50}$\\
\hlinewd{1pt}
\end{tabular}%
}
\vspace{-1mm}
\caption{ The test accuracy of VPT-Deep on FGVC and VTAB-1k tasks with respect to the number of prompts. We report 6 different prompt settings, where $k\%$ represents how many prompts we use for VPT training compared to the original number of prompts reported in \cite{jia2022visual}.
We run the same configuration 3 times and report the mean and standard deviation.
We transfer an ImageNet pre-trained ViT-B/16 \cite{dosovitskiy2020image} backbone.
}
\label{table:analysis:acc_vs_prompt}
  \vspace{-2mm}
\end{table*}

\section{Preliminary}

\noindent\textbf{Vision Transformer.} 
Our work is based on ViT \cite{dosovitskiy2020image} which processes image tokens with multiple attention operations.
The input image is sliced into multiple patches (\ie tokens).
Then, in each layer, the self-attention operation is applied to the image tokens.
Let's assume we have token embedding $X \in \mathbb{R}^{n \times d}$,
Query $Q = XW_q$, Key $K = XW_k$, Value $V = XW_v$ with linear projection. Then, the attention operation can be formulated as follows.
\begin{equation}
    Attention(Q,K,V) = \underbrace{Softmax(\frac{QK^T}{\sqrt{d}})}_{A}V,
    \label{attention_eq}
\end{equation}
where $A$ is the self-attention matrix after the Softmax function.
We utilize Multi-Head Self-Attention (MHSA) in our approach, which takes the outputs of multiple single-head attention blocks, and then projects the combined output using an additional parameter matrix.
\begin{equation}
    head_i = Attention(XW_q^i,XW_k^i,XW_v^i).
\end{equation}
\begin{equation}
    MHSA(X) = Concat[head_1, ..., head_H]W_o + X.
\end{equation}
The output tokens generated by the MHSA block are fed into a Feed-Forward Network (FFN), which is composed of two fully-connected layers with a GELU activation layer in between. In the last encoder layer of the Transformer, the $[CLS]$ token is extracted from the output tokens and employed to predict the class.

\noindent\textbf{Visual Prompt Tuning.}
Visual Prompt Tuning (VPT) \cite{jia2022visual} suggests a memory-efficient fine-tuning technique by adding a set of learnable prompts at the input/intermediate layers. Depending on where the prompts are added, VPT has two versions: VPT-Shallow and VPT-Deep.

Let $X_i \in \mathbb{R}^{n \times d}$ be the token embedding at layer $i\in \{1,2,...,L\}$, and $F_i(\cdot)$ be the operations inside layer $i$. VPT-Shallow prepends $m$ prompts $P_1 \in \mathbb{R}^{m \times d}$ to the input token embedding $X_1$. 
\begin{equation}
    [Z_2, X_2] = F_1([P_1;X_1]).
\end{equation}
\begin{equation}
    [Z_{i+1}, X_{i+1}] = F_i([Z_i; X_i]) \hspace{4mm} \textup{for}  \hspace{2mm} 1<i\leq L.
\end{equation}
Here, $Z_{i}$ is the output tokens from the layer $i$.
Note that only $P_1$ and the classification head are trained. 

On the other hand, VPT-Deep introduces prompts $P_i \in \mathbb{R}^{m \times d}$ in every layer. 
\begin{equation}
    [\hspace{1mm} \_\_ \hspace{1mm}, X_{i+1}] = F_i([P_i;X_i]) \hspace{4mm} \textup{for}  \hspace{2mm} 1\leq i\leq L.
\end{equation}
VPT-Deep shows higher performance than VPT-Shallow by using more prompts. In our work, we focus on VPT-Deep due to its superior performance.
Although VPT requires significantly less memory usage for training, the computational overhead increases as the total number of tokens increases.

\begin{figure}[t]
\begin{center}
\centering
\def\arraystretch{0.5}
\begin{tabular}{@{\hskip 0.01\linewidth}c@{\hskip 0.03\linewidth}c@{}c}
\includegraphics[width=0.45\linewidth]{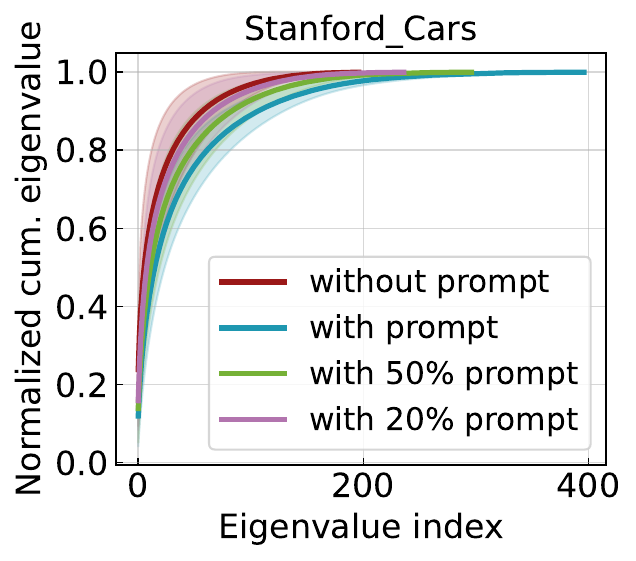} &
\includegraphics[width=0.45\linewidth]{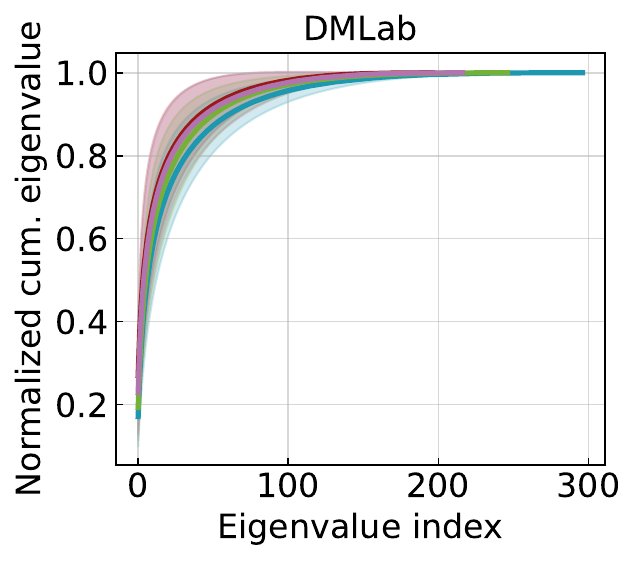} 
\end{tabular}
\end{center}
\vspace{-6mm}
\caption{ 
The normalized cumulative eigenvalue of self-attention matrix $A$ in Eq. \ref{attention_eq} on {Stanford Cars} and {DMLab}. We report the mean and standard deviation across all layers. 
}
 \vspace{-2mm}
\label{fig:anlysis:performance}
\end{figure}

\section{Analysis on the Number of Visual Prompts}
\label{section:analysis}
In this section, we analyze the impact of prompts on self-attention operation and fine-tuning accuracy. We first demonstrate two observations, and then we provide the mathematical support why the performance does not linearly improve as we use more prompts.

{\textbf{Observation1}}: \textit{Reducing the number of prompts does not linearly decrease the accuracy.}  
In addition to Fig. \ref{fig:intro:performance}, we provide further empirical evidence on the correlation between the number of prompts and the fine-tuning accuracy.
We evaluate the test accuracy of our approach on FGVC and VTAB-1k \cite{zhai2019large} tasks with varying the number of prompts. It is worth noting that each dataset requires a specific number of prompts to achieve optimal performance, as reported in \cite{jia2022visual}. We focus on datasets that require more than 10 prompts for both VPT-Shallow and VPT-Deep since using fewer than 10 prompts does not result in significant computational overhead.
We present the performance change according to the number of prompts in Table \ref{table:analysis:acc_vs_prompt}. Our analysis shows that for the majority of the datasets, decreasing the number of prompts by about 50\% does not result in a significant decline in performance. Additionally, most of the performance decrease occurred in the range of 10\% to 40\%, indicating that the relation between accuracy and the number of prompts is not linear.

{\textbf{Observation 2}}: \textit{Self-attention matrix is low-rank before/after adding prompts.}
The previous work \cite{wang2020linformer} shows that the self-attention matrix in ViT is low-rank. In a similar line of thought, we investigate the rank of the self-attention matrix when we add prompts.
In Fig. \ref{fig:anlysis:performance}, we compare the cumulative eigenvalue of the self-attention matrix $A$ \textit{without} prompts and \textit{with} prompts.
 Our results show that the self-attention matrix remains low-rank even when prompts are added to the self-attention matrix.
 Especially, for the Stanford Cars dataset, we add 200 prompts which is a large number of tokens than the original image tokens (\ie 197), but the cumulative eigenvalue trend does not change.
Overall, the results imply that only a few prompts affect the self-attention operation.

To understand why the number of prompts is not linearly correlated to the self-attention operation and the accuracy, we provide a mathematical analysis here.
We use the rank of the approximated low-rank matrix of the attention matrix as a surrogate metric to evaluate the impact of the prompt on the self-attention operation.
\begin{theorem}[Self-attention is low rank. Proved in \cite{wang2020linformer}]
 Let ${A} \in \mathbb{R}^{n \times n}$ be a self-attention matrix, and $v \in \mathbb{R}^{n}$ be a column vector of value matrix $V$. Then, there exists a low-rank matrix $\Tilde{A} \in \mathbb{R}^{n \times n}$ satisfying 
 \begin{equation}
     Pr(\|\Tilde{A}v^T - {A}v^T \| < \epsilon \|Av^T\|) > 1-o(1),
 \end{equation}
 where the rank of $\Tilde{A}$ is bounded, \ie, $rank(\Tilde{A}) = \Theta(log(n))$.
\label{theorem1}
\end{theorem}

\begin{proposition}
 For any low-rank matrices $\Tilde{A}_{n} \in \mathbb{R}^{n \times n}$ and $\Tilde{A}_{n+m} \in \mathbb{R}^{(n+m) \times (n+m)}$ satisfying $Pr(\|\Tilde{A}v^T - {A}v^T \| < \epsilon \|Av^T\|) > 1-o(1)$, we have 
\begin{equation}
    rank(\Tilde{A}_{n+m}) - rank(\Tilde{A}_n) = O(log(m)),
    \label{eq:bigo_logm}
\end{equation}
where $m$ is the number of prompts.
\end{proposition}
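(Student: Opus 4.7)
My plan is to derive the bound via a Johnson--Lindenstrauss (JL) style concentration argument, since the hypothesis
\[
\Pr\bigl(\|\tilde{A}v^T - Av^T\| < \epsilon \|Av^T\|\bigr) > 1 - o(1)
\]
is exactly the kind of ``distortion'' guarantee that JL-type lemmas bound in terms of target dimension. First I would reinterpret $\tilde{A}$ as a rank-$r$ factorization acting on input vectors $v$ drawn from some distribution, so that the approximation condition becomes a statement about preserving the norms of $Av^T$ under projection to an $r$-dimensional subspace with distortion at most $\epsilon$.

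Next I would invoke the JL lemma, which tells us that $r = O(\log N / \epsilon^{2})$ directions suffice (and are necessary up to constants for random inputs) to preserve the norms of $N$ vectors to within a factor of $1 \pm \epsilon$ with probability $1 - o(1)$. Applying this separately to the $n$-dimensional regime and the $(n+m)$-dimensional regime yields
\[
\operatorname{rank}(\tilde{A}_{n}) = O(\log n / \epsilon^{2}), \qquad \operatorname{rank}(\tilde{A}_{n+m}) = O(\log(n+m) / \epsilon^{2}).
\]
Subtracting and treating $\epsilon$ as a fixed constant gives
\[
\operatorname{rank}(\tilde{A}_{n+m}) - \operatorname{rank}(\tilde{A}_{n}) = O\bigl(\log(n+m) - \log n\bigr) = O\bigl(\log(1 + m/n)\bigr),
\]
and when $m$ dominates $n$ (the regime of interest, where one is adding many prompts), this simplifies to $O(\log m)$, which is the claimed inequality.

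The main obstacle I anticipate is the quantifier ``for any low-rank matrices $\tilde{A}_{n}, \tilde{A}_{n+m}$'': taken literally, one could choose a matrix of maximal rank and trivially satisfy the probability condition, so the statement must be read as an \emph{extremal} claim about the minimum rank compatible with the JL-type guarantee. I would therefore spend the most care justifying that ``low-rank'' is meant in the sense of the smallest rank achieving the approximation property, and that the JL lower bound (not just the achievability upper bound) applies here so that $\operatorname{rank}(\tilde{A})$ cannot be pushed below $\Omega(\log \cdot / \epsilon^2)$. Once that interpretation is pinned down, the difference-of-logarithms arithmetic is routine and yields the $O(\log m)$ bound of Eq.~\eqref{eq:bigo_logm}.
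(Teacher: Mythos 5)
Your proposal follows essentially the same route as the paper: its Theorem 1 (a JL-type low-rank approximation result) supplies exactly the two-sided bounds $\alpha\log N \le \operatorname{rank}(\tilde{A}_N) \le \beta\log N$ for $N \in \{n, n+m\}$, and the proof simply subtracts them and lets $m$ grow to conclude $O(\log m)$. The only nuance is that with distinct constants $\alpha < \beta$ the difference is bounded by $\beta\log(n+m) - \alpha\log n$ rather than your cleaner $O(\log(1+m/n))$ cancellation (which presumes matching constants and, as you correctly flag, requires the lower bound on $\operatorname{rank}(\tilde{A}_n)$, not just the upper bounds), but for fixed $n$ this still yields the claimed $O(\log m)$.
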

\begin{proof}

\noindent Based on Theorem \ref{theorem1}, given a bounded error  
$Pr(\|\Tilde{A}v^T - {A}v^T \| < \epsilon \|Av^T\|) > 1-o(1)$, the rank of $\Tilde{A}_{n}$ and $\Tilde{A}_{n+m}$ can be:
\begin{equation}
    \alpha log(n) \le rank(\Tilde{A}_{n}) \le \beta log(n),
\end{equation}
\begin{equation}
    \alpha log(n+m) \le  rank(\Tilde{A}_{n+m}) \le \beta log(n+m),
\end{equation}
where $\alpha$ and $\beta$ are the constants for the lower and upper bound respectively. Then, we have 
\begin{equation}
\footnotesize
    log\left(\frac{(n+m)^{\alpha}}{n^{\beta}}\right) \le  rank(\Tilde{A}_{n+m}) - rank(\Tilde{A}_{n}) \le log\left(\frac{(n+m)^{\beta}}{n^{\alpha}}\right).
\end{equation}
We obtain Eq. \ref{eq:bigo_logm} with respect to the variable $m$.
Additional details can be found in the Supplementary.
\end{proof}

Proposition \ref{theorem1} demonstrates that the increase of the rank of the low-rank self-attention matrix follows a logarithmic trend. 
As the logarithmic function is concave, the effect of adding new prompts on the attention operation diminishes as the number of prompts increases. For instance, increasing the number of prompts from $0$ to $50$ has a greater impact than increasing the number of prompts from $150$ to $200$.
This analysis is aligned with our \textbf{Observation 1} where reducing the number of prompts by approximately 50\% does not lead to a significant performance drop, but most of the performance drop exists in the $10\% \sim 40\%$ range.

\begin{figure}[t]
\begin{center}
\centering
\def\arraystretch{0.5}
\begin{tabular}{@{\hskip 0.01\linewidth}c@{\hskip 0.03\linewidth}c@{}c}
\includegraphics[width=0.48\linewidth]{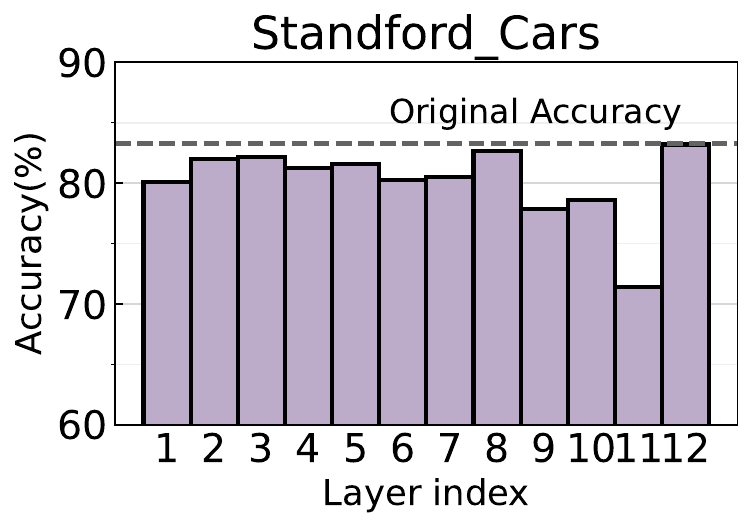} &
\includegraphics[width=0.48\linewidth]{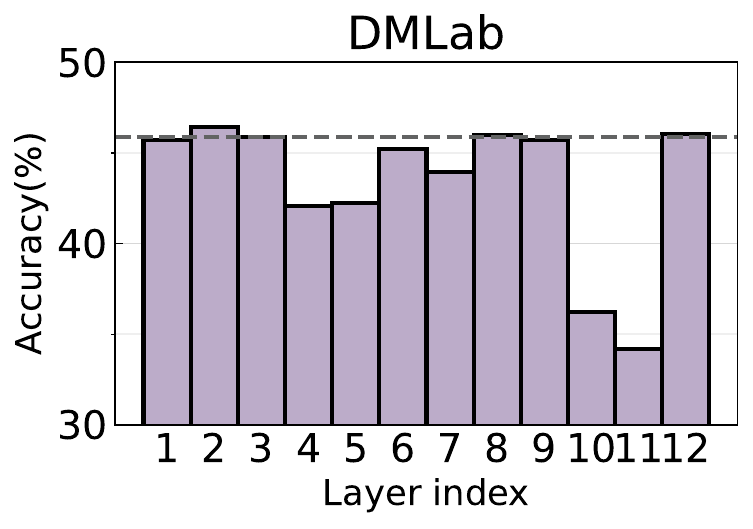} \\
\includegraphics[width=0.48\linewidth]{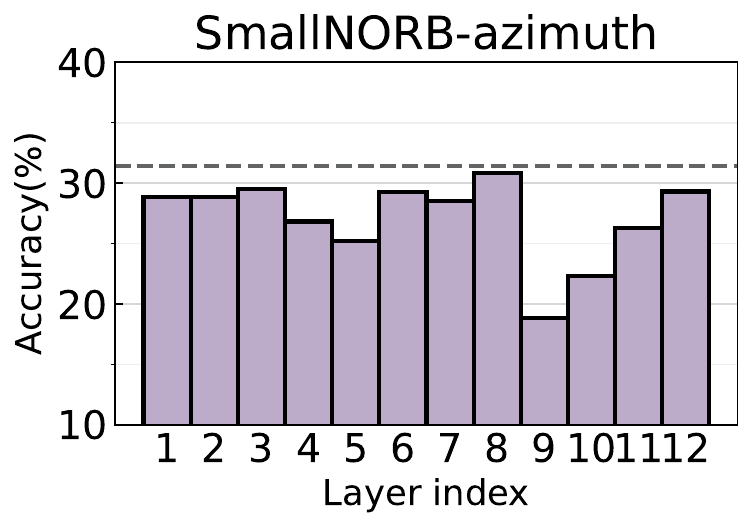} &
\includegraphics[width=0.48\linewidth]{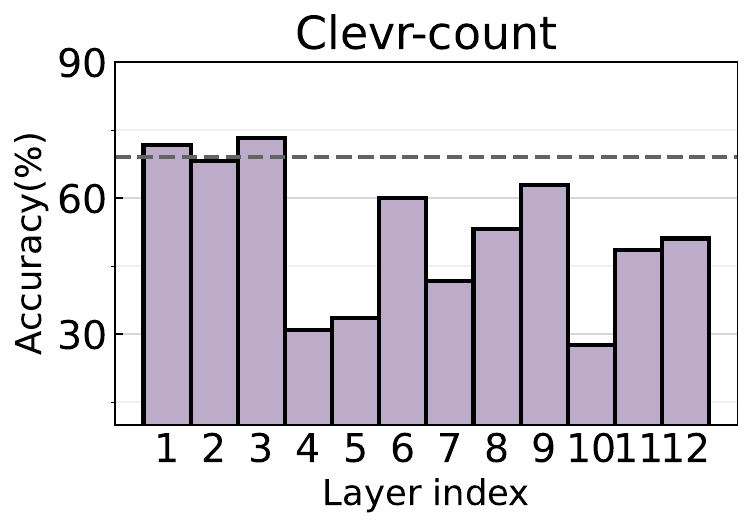} 
\end{tabular}
\end{center}
\vspace{-7mm}
\caption{ 
Accuracy changes by removing whole prompts in one layer. We report the original accuracy with a dotted line. Each dataset shows a different trend in accuracy degradation.}
 \vspace{-2mm}
\label{fig:analysis:layerwise_performance}
\end{figure}

\section{Prompt Condensation}

Although decreasing the number of prompts up to 50\% shows slight performance degradation, the performance drop is non-trivial in the small number of prompts regime. In Table \ref{table:analysis:acc_vs_prompt}, the major performance drop happens under 40\% of prompts on most datasets. To address this, we propose a technique called \textit{Prompt Condensation} (PC).

\noindent\textbf{Problem Statement.}
Our objective is to minimize the number of prompts while maintaining accuracy.
Let $P= \{p_1, p_2, ..., p_N\}$ be the set of prompts, and $P'$ be the condensed prompt set which has a smaller number of elements. Then our goal can be written as:
\begin{equation}
    \displaystyle\min_{P'} |\mathcal{L}(\theta, P)-\mathcal{L}(\theta, P')|,
\end{equation}
where $\mathcal{L}(\cdot)$ is the objective function of a task, $\theta$ is the model parameters.
At the same time, we also aim to minimize the number of prompts inside $P'$.

In designing our model for the Parameter Efficient Transfer Learning (PETL) scenario, we consider the following principles:
(1) Model parameters cannot be updated due to memory constraints. Therefore, only prompts can be trainable.
(2) Additional modules such as those proposed in \cite{rao2021dynamicvit,meng2022adavit,yin2022vit} cannot be utilized.
Given these constraints, most token sparsification methods are difficult to be applied in our case.
Instead, our method focuses on identifying important prompts and fine-tuning them without updating/adding any model parameters.

\noindent\textbf{Are all prompts equally important?} 
The important design choice for PC is whether to condense the same number of prompts for each layer or not.
To figure this out, we measure the accuracy change with respect to the prompts in each layer. 
We remove prompts in layer $l$ while other layers keep the same number of prompts.
As shown in Fig. \ref{fig:analysis:layerwise_performance}, we observe that prompts in different layers have varying contributions to the accuracy, and the trend           varies across different datasets. 
This observation leads us to leverage a global score across all layers, which is unlike the layer-wise score (\ie using the row similarity in self-attention) widely used in the previous work \cite{liang2022not,rao2021dynamicvit,fayyaz2022adaptive}.

\noindent\textbf{Prompt Scoring.}
We define the impact of prompt $p_i$ by computing the difference of the objective function from the fine-tuned VPT model.
\begin{equation}
    \|\Delta \mathcal{L}(\theta, p_i)\|_2 = \|\mathcal{L}(\theta, P)-\mathcal{L}(\theta, P'_i)\|_2,
    \label{eq:score_1}
\end{equation}
where $P'_i$ is the modified prompt set by zeroizing $p_i \in P$.
With Taylor approximation, we can approximate $\mathcal{L}(\theta, P')$ at $p_i=0$ as
\begin{equation}
\mathcal{L}(\theta, P_i') \approx \mathcal{L}(\theta, P) - \frac{d\mathcal{L(\theta)}}{dp_i}p_i.
    \label{eq:score_2}
\end{equation}
We only use the first-order term since the beyond second-order term requires huge memory storage.
If we substitute Eq. \ref{eq:score_2} to Eq. \ref{eq:score_1}, we obtain
\begin{equation}
    \|\Delta \mathcal{L}(\theta, p_i)\|_2 \approx \|\frac{d\mathcal{L}(\theta)}{dp_i}p_i\|_2.
        \label{eq:score_3}
\end{equation}
We average Eq. \ref{eq:score_3} across all data samples to compute the importance score.
\begin{equation}
   s_{p_i} = \frac{1}{|D|} \sum_{d \in D} \|\frac{d\mathcal{L}(\theta, d)}{dp_i}p_i\|_2,
        \label{eq:score_4}
\end{equation}
where $D$ is the input data set. Note that, calculating the importance score does not bring huge computational overhead since we only need to compute the backward gradient for the prompts.

{
Once we calculate the importance score for each prompt, we select the prompts with the highest k\% scores across all layers. This global prompt selection method inherently allocates the optimal number of prompts for each layer. On the other hand, with a local layer-wise prompt selection, we would enforce top k\% prompt selection uniformly across all layers that may 
inhibit the representation power within the model.
In our experiments, we show the global score provides better performance than the local layer-wise metrics.
}

Our approach is similar to filter pruning in CNNs \cite{lecun1989optimal,molchanov2016pruning} in the aspect of utilizing Taylor expansion. However, we have innovatively adapted this concept to the token level, presenting a fundamentally distinct granularity in pruning strategy. To our knowledge, our work is the first to employ gradient information directly for token pruning within the context of Vision Transformer (ViT) architectures. As a result, we believe our research paves the way for the potential application of existing channel pruning techniques to token pruning in ViTs.

\noindent\textbf{Overall Training Process.}
Algorithm 1 illustrates the overall process of Prompt Condensation.
We first train the original prompt set $P$ (Line 1). 
Then we compute the importance score of each prompt inside $P$ (Line 2). 
After that, we sort the importance score and select the prompts with the top $k\%$ of the importance score (Line 3). This provides the condensed prompt set $P'$. We discard the remaining $(100-k)\%$ of prompts.
Finally, the prompts within $P'$ are fine-tuned  (Line 4). For fine-tuning, we use less number of epochs $N_p$ compared to the original VPT training epochs $N_v$. We analyze the effect of $N_p$ in Section \ref{sec:exp_analysis}.
Note that the entire training process freezes weight parameters across the model except for the last classifier.

\begin{algorithm}[t]\small
        \caption{Prompt Condensation}
       \textbf{Input}: Training data $D$; Neural network $f(\cdot;P)$, Original prompt set $P$, Condensed prompt set $P'$, Condensation ratio $k \%$\\
      \textbf{Output}:  Trained  $f(\cdot; P')$
      \begin{algorithmic}[1]
        %
        \State {Train $f(\cdot; P)$ on dataset $D$ for $N_{v}$ epochs}  \Comment{ VPT training}
        \State {Compute importance score $s_{p_i}$ across all prompts}   \Comment{Eq. \ref{eq:score_4}}
        \State {$P' \gets$ Select the prompts with top $k\%$ of $s_{p_i}$}
        \State {Finetune $f(\cdot; P')$ on dataset $D$ for $N_{p}$ epochs} 
      \end{algorithmic}
          \label{algorithm: overall}
\end{algorithm}

\begin{figure*}[t]
\begin{center}
\centering
\def\arraystretch{0.5}
\begin{tabular}{@{\hskip 0.01\linewidth}c@{\hskip 0.01\linewidth}c@{\hskip 0.01\linewidth}c@{\hskip 0.01\linewidth}c@{}c}
\includegraphics[width=0.23\linewidth]{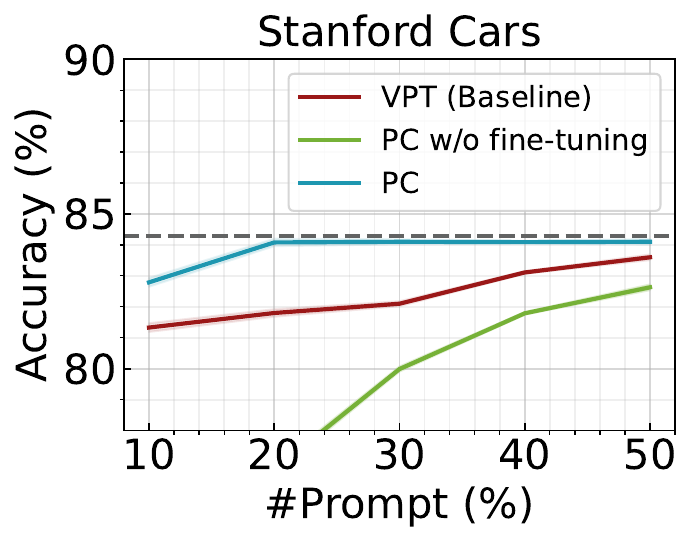} &
\includegraphics[width=0.23\linewidth]{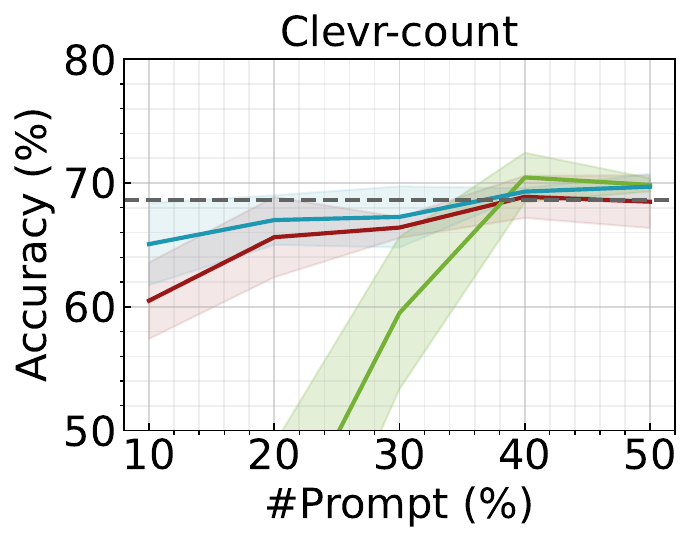}&
\includegraphics[width=0.23\linewidth]{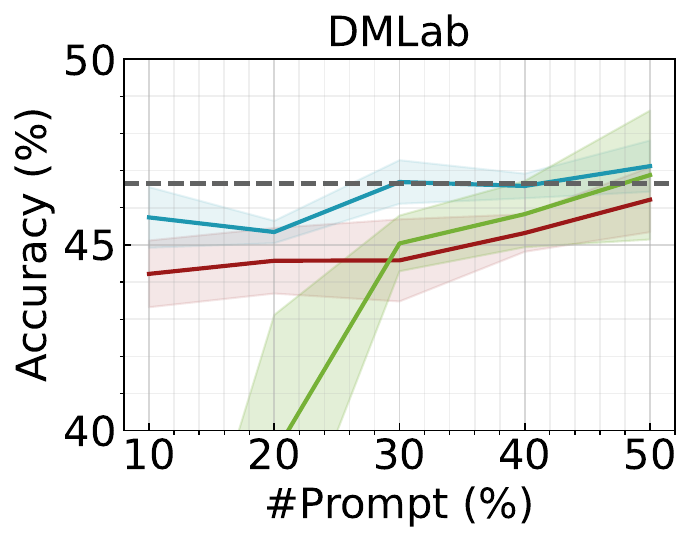}&
\includegraphics[width=0.23\linewidth]{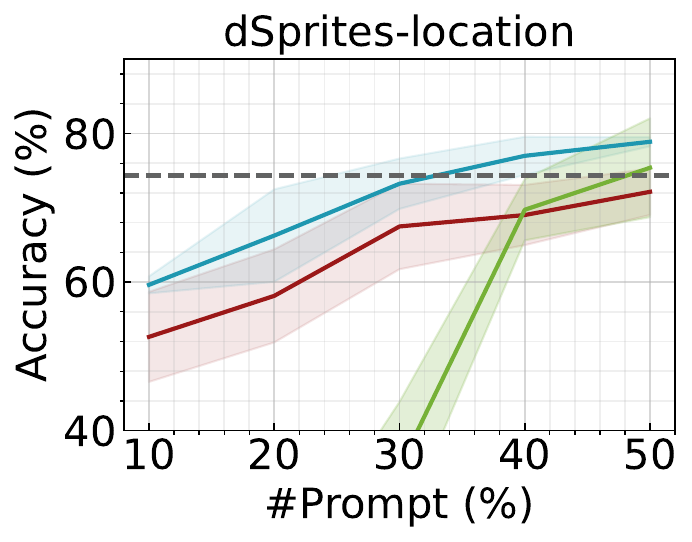} \\
\vspace{-1mm}
\includegraphics[width=0.23\linewidth]{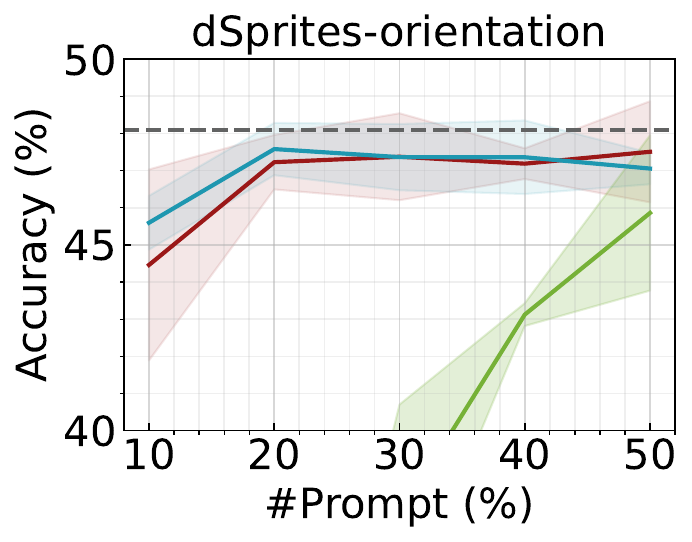}&
\includegraphics[width=0.23\linewidth]{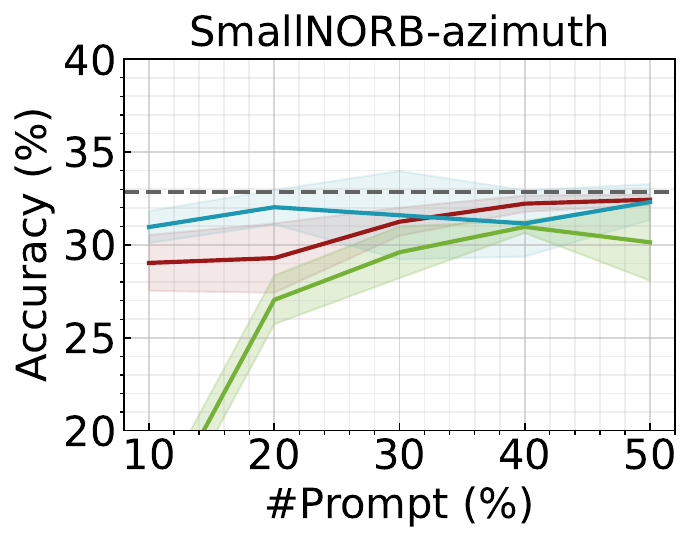}&
\includegraphics[width=0.23\linewidth]{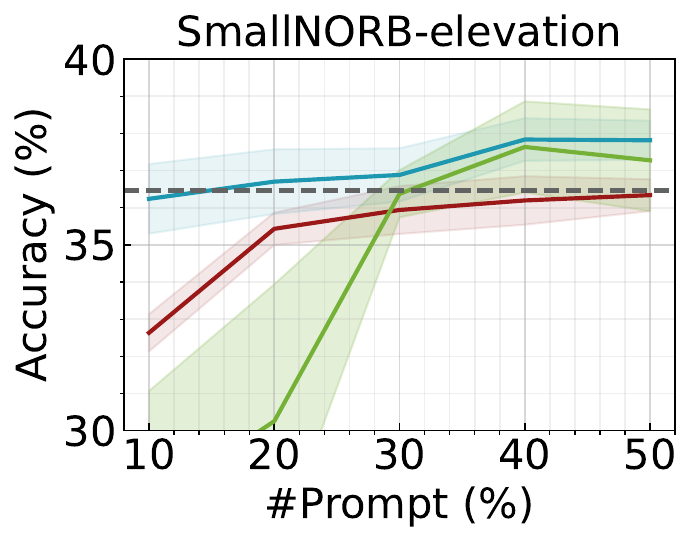} &
\includegraphics[width=0.23\linewidth]{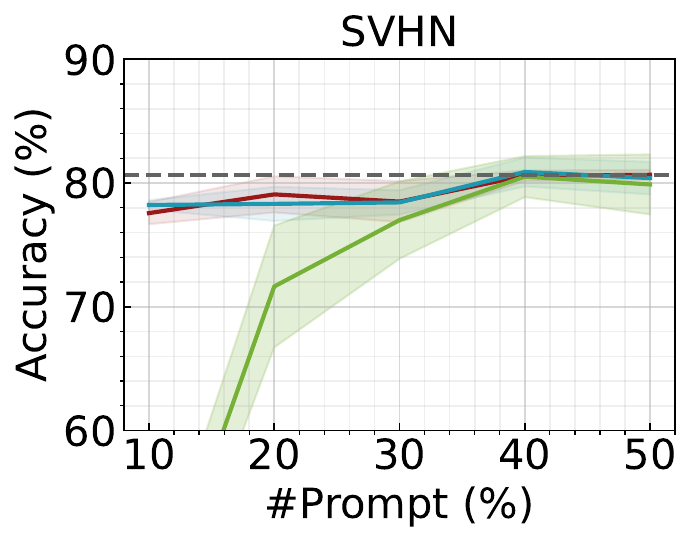} \\
\end{tabular}
\end{center}
\vspace{-6mm}
\caption{ 
 The test accuracy of VPT-Deep \cite{jia2022visual}, \textit{PC w/o fine-tuning}, and our proposed \textit{PC} with respect to the number of prompts. We use the ViT-B/16 backbone. A dotted line represents the accuracy with 100\% prompts.}
 \vspace{-2mm}
\label{fig:exp:pc_kd_results_deep}
\end{figure*}

\begin{figure*}[t]
\begin{center}
\centering
\def\arraystretch{0.5}
\begin{tabular}{@{\hskip 0.01\linewidth}c@{\hskip 0.01\linewidth}c@{\hskip 0.01\linewidth}c@{\hskip 0.01\linewidth}c@{}c}
\includegraphics[width=0.23\linewidth]{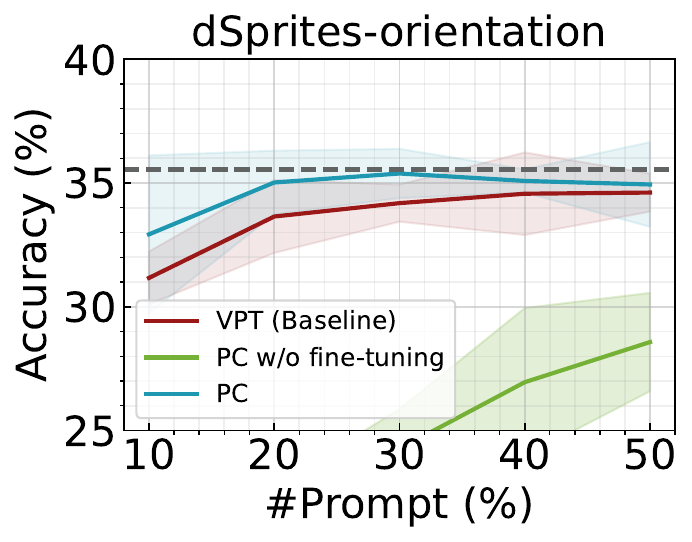} &
\includegraphics[width=0.23\linewidth]{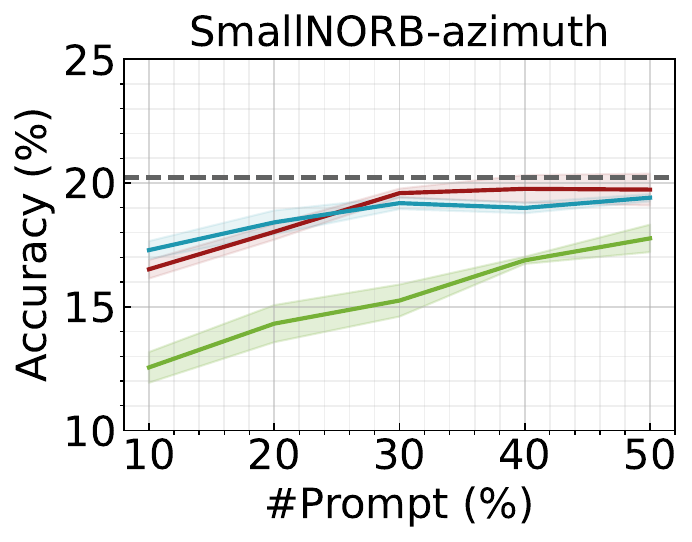}&
\includegraphics[width=0.23\linewidth]{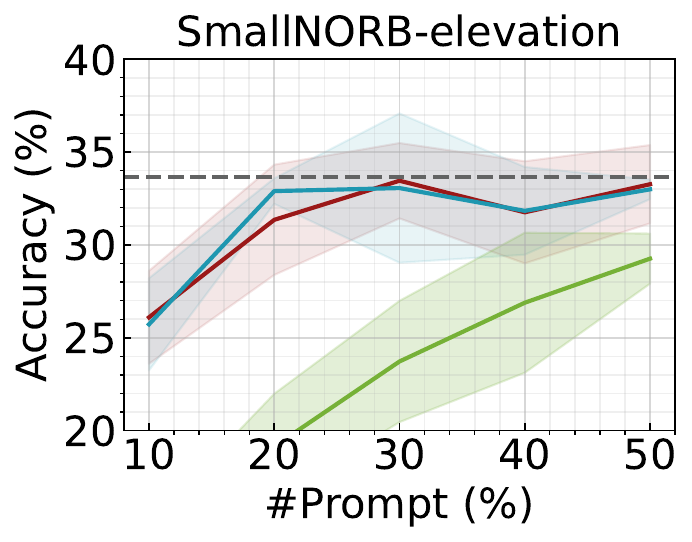} &
\includegraphics[width=0.23\linewidth]{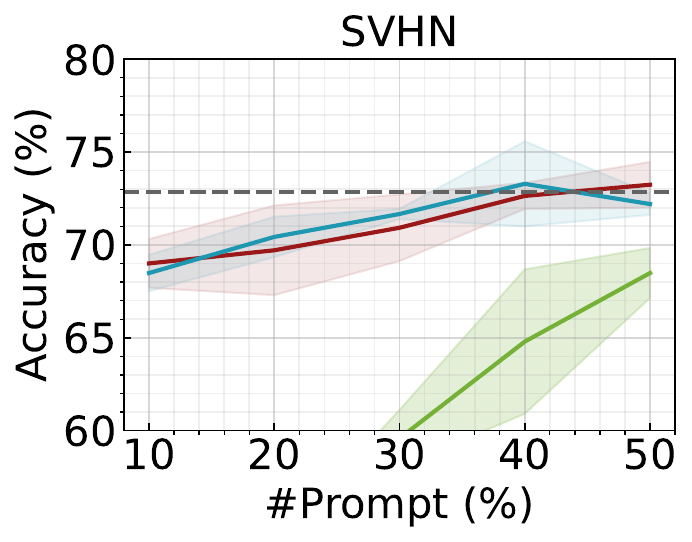}&
\end{tabular}
\end{center}
\vspace{-6mm}
\caption{ 
The test accuracy of VPT-Shallow \cite{jia2022visual}, \textit{PC w/o fine-tuning}, and our proposed \textit{PC} with respect to the number of prompts. We use the ViT-B/16 backbone.  A dotted line represents the accuracy with 100\% prompts.}
 \vspace{-2mm}
\label{fig:exp:pc_kd_results_shallow}
\end{figure*}

\begin{figure*}[t]
\begin{center}
\centering
\def\arraystretch{0.5}
\begin{tabular}{@{\hskip 0.0\linewidth}c@{\hskip 0.0\linewidth}c@{\hskip 0.0\linewidth}c@{\hskip 0.0\linewidth}c@{}c}
\includegraphics[width=0.20\linewidth]{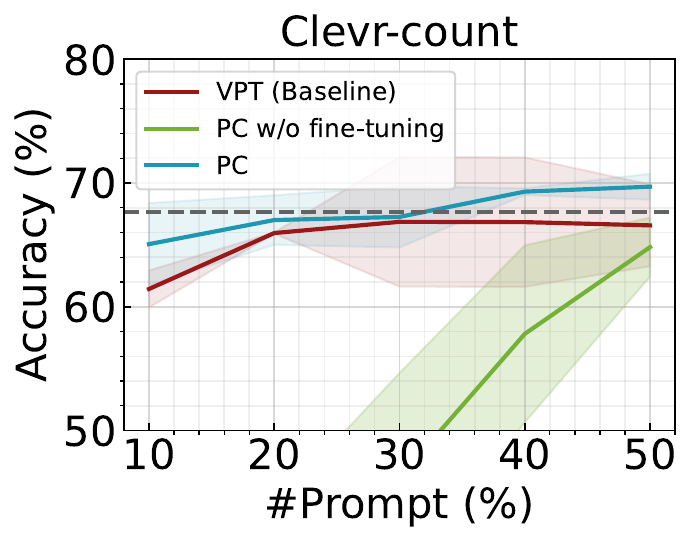} &
\includegraphics[width=0.20\linewidth]{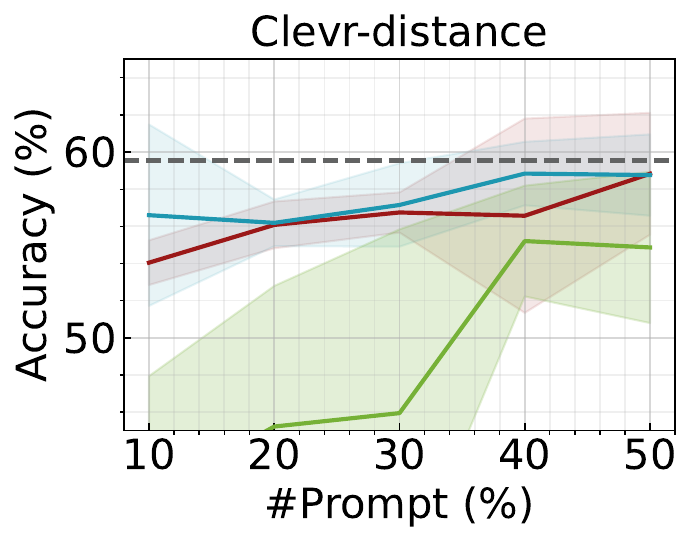}&
\includegraphics[width=0.20\linewidth]{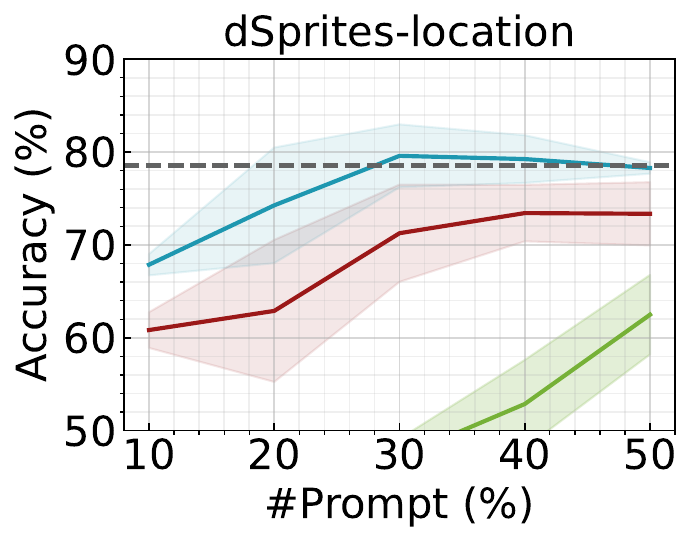}&
\includegraphics[width=0.20\linewidth]{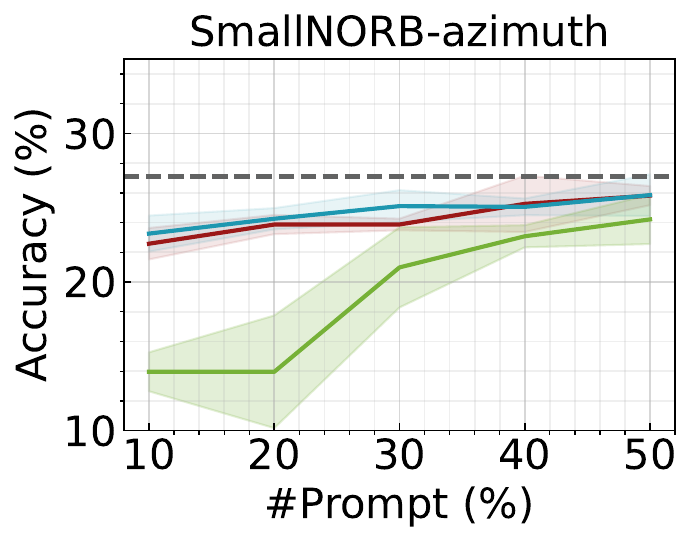} &
\includegraphics[width=0.20\linewidth]{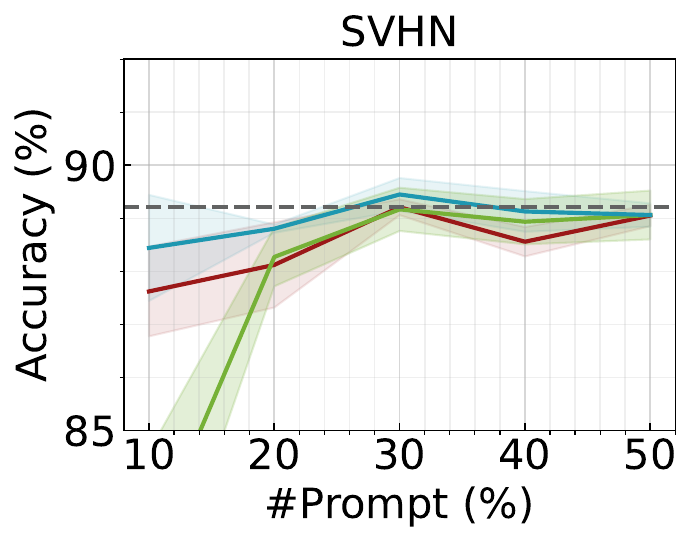} 
\end{tabular}
\end{center}
\vspace{-6mm}
\caption{
 The test accuracy of VPT-Deep \cite{jia2022visual}, \textit{PC w/o fine-tuning}, and our proposed \textit{PC} with respect to the number of prompts. We use the Swin-B backbone. A dotted line represents the accuracy with 100\% prompts.}
 \vspace{-1mm}
\label{fig:exp:swin_pc_kd_results_deep}
\end{figure*}

\section{Experiments}

\subsection{Experiment Setting}

\noindent\textbf{Architecture.}
We conduct experiments using two transformer architectures pre-trained on ImageNet-22k, \ie, Vision Transformer (ViT-B/16) \cite{dosovitskiy2020image} and Swin Transformer (Swin-B) \cite{liu2021swin}.

\noindent\textbf{Dataset.}
We use the FGVC and VTAB-1k tasks as our datasets. FGVC consists of 5 datasets, including CUB-200-2011 \cite{wah2011caltech}, NABirds \cite{van2015building}, Oxford Flowers \cite{nilsback2008automated}, Stanford Dogs \cite{khosla2011novel}, and Stanford Cars \cite{gebru2017fine}. VTAB-1k \cite{zhai2019large} contains 19 datasets with various visual domains. Following previous work \cite{zhai2019large,jia2022visual}, we use the provided 800-200 split of the trainset for training and report the average accuracy score on tests within three runs. For both FGVC and VTAB-1k datasets, we select datasets that show a non-trivial (\ie $\ge 1\%$) accuracy drop with $10\%$ prompts compared to the original VPT. As a result, we have 8 datasets for ViT: \{Stanford Cars, Clevr-count, DMLab, dSprites-location, dSprites-orientation, smallNORB-azimuth, smallNORB-elevation, SVHN\}, and 5 datasets for Swin: \{Clevr-count, Clevr-distance, dSprites-location, smallNORB-azimuth, SVHN\}. The details of dataset selection are provided in the Supplementary.

We observed that non-trivial performance drops tend to occur in more challenging downstream tasks. To illustrate this, we calculated the mean and standard deviation of test accuracies across downstream tasks, separating them into those with non-trivial ($\ge 1\%$) and trivial ($< 1\%$) performance drops. These tasks were derived from the FGVC and VTAB-1k datasets.
Our results show that datasets with non-trivial accuracy drops exhibit an average accuracy of $58.91_{\pm 20.23}\%$, while those with trivial accuracy drops demonstrate a higher average accuracy of $81.96_{\pm 11.54}\%$.

\noindent\textbf{Hyperparameters.}
We follow the hyperparameters (\eg, weight decay, learning rate) reported in \cite{jia2022visual}. Each dataset had a different number of prompts, determined by the previous work \cite{jia2022visual} that reported the best-performing number of prompts for each dataset. During the prompt fine-tuning stage, we turn off dropout and used a $\times 0.1$ of the original VPT learning rate. For the prompt condensation, we retrain the selected prompts for 20 epochs, which is shorter than the original VPT training process. In Algorithm 1, we set the number of epochs $N_v$ for training VPT to 100, following the original paper \cite{jia2022visual}.

\subsection{Performance Comparison}
We first evaluate the effectiveness of Prompt Condensation (PC) with a limited number of prompts.
Specifically, we vary the number of prompts from $10\%$ to $50\%$, where the notation of $k\%$ denotes the use of $k\%$ of the number of prompts reported in \cite{jia2022visual}.
We compare the performance of PC with the following models:

\noindent {$\bullet$ VPT (baseline)}: We train the ImageNet pre-trained model with $k\%$ of original prompts. 

\noindent {$\bullet$ PC w/o fine-tuning}: From the trained VPT with $100\%$ of prompts, we compute the importance score (Eq. \ref{eq:score_4}) of each prompt and select top $k\%$ of prompts based on the score  and discard the remainder.


Fig. \ref{fig:exp:pc_kd_results_deep} and \ref{fig:exp:pc_kd_results_shallow} present a comparison of the performance of VPT-Deep and VPT-Shallow, respectively. From the results, we make the following observations:
(1) For VPT-Deep, PC maintains the performance with only $20\sim 30\%$ number of prompts, demonstrating its effectiveness compared to the naive VPT baseline. 
(2)  The performance gain achieved by applying PC to VPT-Shallow is comparatively lower than that of VPT-Deep. This can be attributed to VPT-Shallow having a smaller number of original prompts, which results in less room for performance improvement. At the same time, VPT-Deep yields higher performance than the VPT-Shallow model. Therefore, we focus on VPT-Deep in this paper.
(3) Interestingly, \textit{PC w/o fine-tuning} with VPT-Deep does not demonstrate a significant performance drop with $40\sim 50\%$ of the original prompts. This suggests that our prompt importance score accurately reflects the impact of each prompt on the overall accuracy.
(4) For the $10\sim 30\%$ regime, there is considerable performance degradation without fine-tuning. However, this can be fully recovered by fine-tuning prompts, demonstrating fine-tuning is an essential stage for PC. 
(5) The results of Swin also provide a similar trend as ViT, as shown in Fig. \ref{fig:exp:swin_pc_kd_results_deep}.

\subsection{Experimental Analysis}
\label{sec:exp_analysis}

\begin{table}[t]
   \centering
\small
\resizebox{0.48\textwidth}{!}{%
\begin{tabular}{l|l|ccccc}
\hlinewd{1pt}
 Datasets & Methods & 10\% & 20\% & 30\% & 40\% & 50\%   \\
\hline
\hline
 \multirow{3}{*}{StanfordCars} &[CLS]-Sim & 81.67 & 82.63  & 83.35 & 83.95  & 84.08  \\
 & Ours-Local & 81.79 & 82.88  &  83.51 & 84.01  & 84.05 \\
& Ours-Global & \textbf{82.79}  & \textbf{84.08}  & \textbf{84.10}  & \textbf{84.09} & \textbf{84.10} \\
\hline
  \multirow{3}{*}{Clevr-count} &[CLS]-Sim & 57.25 & 63.02 & \textbf{67.51} & 67.40 & 67.92  \\
 & Ours-Local & 57.95 & \textbf{67.09}  & 66.04 & 66.58  &  66.75\\
& Ours-Global & \textbf{65.06} & {67.00}& {67.26} & \textbf{69.30} & \textbf{69.70}  \\
\hline
  \multirow{3}{*}{DMLab} &[CLS]-Sim & 45.44 & 45.02 & 46.25 & \textbf{46.81} & 46.74 \\
 & Ours-Local & 45.33 & 45.25  & 46.45 & 45.91  &  47.06\\
& Ours-Global & \textbf{45.74} &\textbf{45.34 }& \textbf{46.69} & {46.58} & \textbf{47.12}  \\
\hline \multirow{3}{*}{dSprites-loc} &[CLS]-Sim & 49.84 & 50.30 & 67.33 & 68.54 & 72.28  \\
 & Ours-Local & 57.95 & 60.26  & 63.26 & 71.30  &  72.87\\
& Ours-Global & \textbf{59.62} & \textbf{66.25} & \textbf{73.23} & \textbf{76.99}&\textbf{78.88}
\\
\hline \multirow{3}{*}{dSprites-ori} &[CLS]-Sim & 40.95 & 45.64 & 46.68 & 46.69 & 46.62  \\
 & Ours-Local & 44.09 & 45.08  & 46.71 & 46.33  &  46.76\\
& Ours-Global & \textbf{45.59} &\textbf{47.58} & \textbf{47.36} & \textbf{47.36 }& \textbf{47.05}  \\
\hline \multirow{3}{*}{SmallNORB-azi} &[CLS]-Sim & 28.93 & 29.48 & \textbf{32.52} & \textbf{31.87} & \textbf{32.33}  \\
 & Ours-Local & 30.29 & 30.60  & 31.24 & {31.80}  &  32.29\\
& Ours-Global & \textbf{30.96} & \textbf{32.03} & 31.59 & 31.15 & 32.31  \\
\hline \multirow{3}{*}{SmallNORB-ele} &[CLS]-Sim & 35.83 & 36.47 & \textbf{37.90} & 37.62 & 37.31  \\
 & Ours-Local & 35.66 & 36.22  & 36.04 & \textbf{38.41}  & \textbf{38.66}\\
& Ours-Global & \textbf{36.24} &\textbf{36.70} & 36.88 & 37.83 & 37.81  \\
\hline \multirow{3}{*}{SVHN} &[CLS]-Sim & 74.26 & 75.87 & 78.06 & 79.63 & 80.00  \\
 & Ours-Local & 76.75 & 78.06  & \textbf{79.18} & \textbf{79.67}  & 79.77\\
& Ours-Global & \textbf{78.22} &\textbf{78.31} & 78.43 & 80.88 & \textbf{80.39} \\
\hline
\hline
\multirow{3}{*}{Average} &[CLS]-Sim  &51.77 &	53.55&	57.45&	57.81&	58.41\\
 & Ours-Local  & 53.72	&55.68&	56.55&	58.00&	58.52\\
& Ours-Global  & \textbf{55.52} &\textbf{57.16} &\textbf{58.19}&	\textbf{59.27}& 	\textbf{59.67} \\
\hlinewd{1pt}
\end{tabular}%
}
\vspace{-1mm}
\caption{Test accuracy is evaluated using different prompt scoring techniques on VPT-Deep with ViT-B/16, with the best performance highlighted in \textbf{bold}.
}
\label{table:exp:prompt_scoring}
  \vspace{-1mm}
\end{table}

\begin{figure}[t]
\begin{center}
\centering
\def\arraystretch{0.5}
\begin{tabular}{@{\hskip -0.00\linewidth}c@{}c@{}c}
\includegraphics[width=0.9\linewidth]{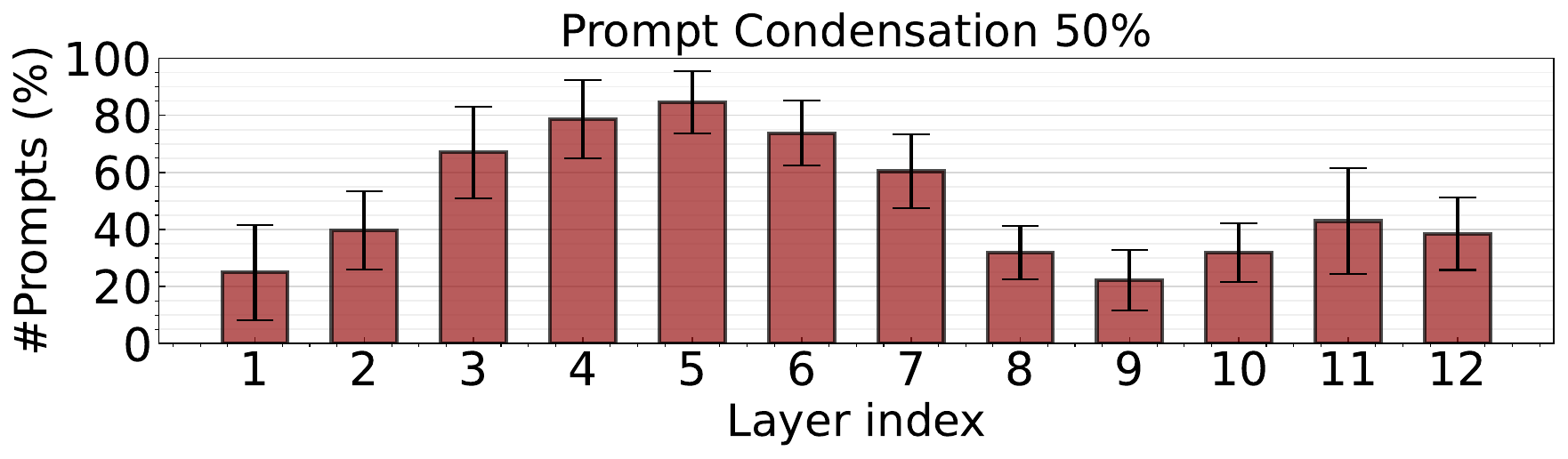}  \\
\includegraphics[width=0.9\linewidth]{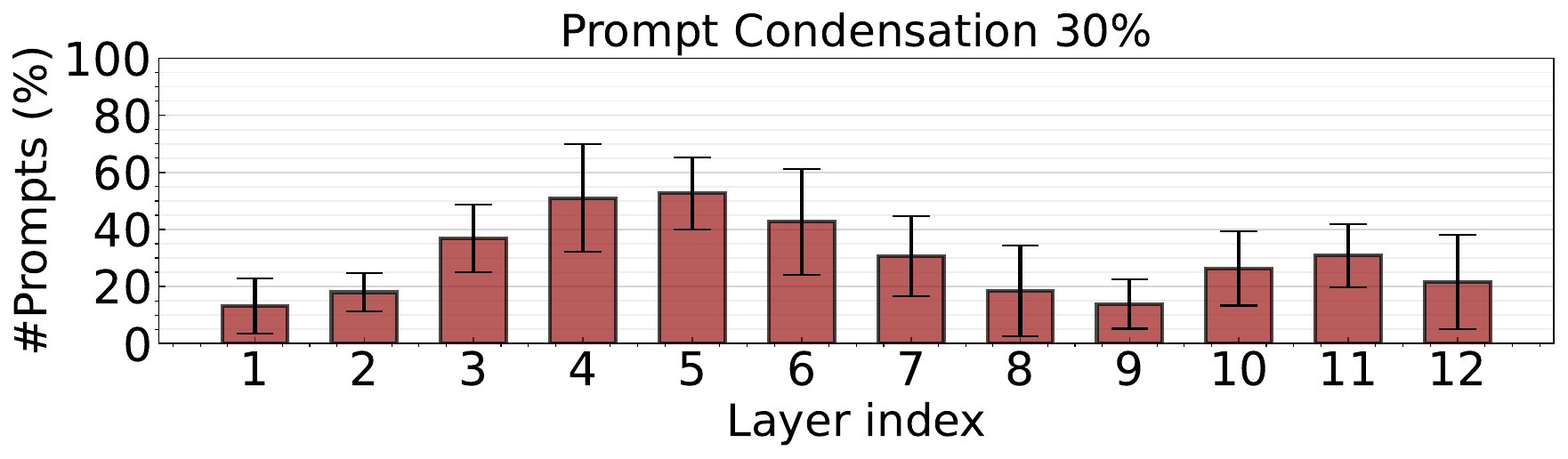}  \\
\includegraphics[width=0.9\linewidth]{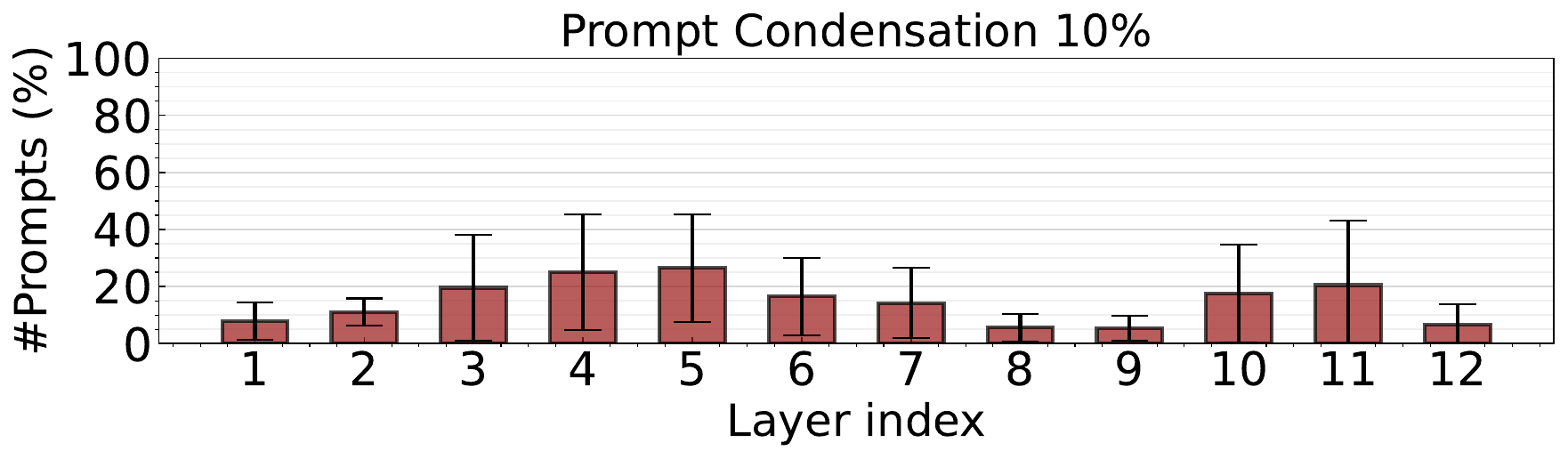} 
\end{tabular}
\end{center}
\vspace{-6mm}
\caption{ 
 Layer-wise prompt distribution with three different prompt condensation levels.}
 \vspace{-1mm}
\label{fig:anlysis:performance_layerwise}
\end{figure}

\noindent\textbf{Design Choice for Prompt Scoring.}
In our method, we compute the gradient to evaluate the importance of each prompt (Eq. \ref{eq:score_4}). Based on this score, we select the top-$k\%$ highest-scored prompts across all layers. To investigate the effectiveness of our prompt scoring technique, we compare it with several variants.

\noindent {$\bullet$ Global Prompt Condensation (ours-Global)}: Our proposed method where the top-$k\%$ highest-scored prompts are selected across all layers.

\noindent {$\bullet$ Local Prompt Condensation (ours-Local)}: Instead of considering whole layers, we select top-$k\%$ scored prompts in one layer. This approach ensures that the number of selected prompts is the same across all layers.

\noindent {$\bullet$ [CLS]-Sim}: We adopt the self-attention similarity between prompt tokens and a [CLS] token as a scoring technique inspired by 
 a line of previous works \cite{liang2022not,rao2021dynamicvit,fayyaz2022adaptive}. Here, we also select top-$k\%$ highest-scored prompts in one layer. 

In Table \ref{table:exp:prompt_scoring}, we present a comparison of the performance achieved using three different prompt scoring techniques. The results demonstrate that our proposed global scoring method, which considers the importance of prompts across all layers, outperforms the other two scoring techniques, particularly for lower percentages of PC (\eg 10\%). Therefore, our findings suggest that a global scoring metric is necessary for PC, given that the significance of prompts varies across different layers.

\begin{table*}[t]
   \centering
\small
\resizebox{0.9\textwidth}{!}{%
\begin{tabular}{c|l|ccccc|ccccc|c}
\hlinewd{1pt}
\multirow{2}{*}{GPU type}&{Dataset} &\multicolumn{5}{c|}{{Local Prompt Condensation}}&\multicolumn{5}{c|}{{Global  Prompt Condensation}}&\\
 & (Unit: ms) & 10\% & 20\% & 30\% & 40\% & 50\%   & 10\% & 20\% & 30\% & 40\% & 50\% & 100\% \\
\hline
\hline
 \multirow{3}{*}{Rtx5000} &Stanford Cars & 356 & 387 & 432 & 473 & 502  & 358 & 391 & 434 & 473 & 501 & 697\\
 &DMLab & 334 & 360 & 375 & 386 & 410  & 338 & 352 & 375 & 390 & 410 & 493\\
& SVHN & 329 &334 & 347 & 356 & 360   & 328 & 336 & 348 & 356 & 360 & 404\\
\hline
 \multirow{3}{*}{V100} &Stanford Cars & 243 & 256 & 281 & 316 & 332 &  245 & 269 & 296 & 320 & 348 & 462\\
& DMLab &223 & 245 & 256 & 266 & 271 & 227 & 240 & 251 & 264 & 278 & 331 \\
 &SVHN & 218 & 223 & 227 & 245 & 249 & 218 & 225 & 229 & 248 & 250 & 269\\
\hline
 \multirow{3}{*}{A100} & Stanford Cars & 55 & 59 & 70 & 74 & 80 & 53 & 59 & 65 & 74 & 80 & 118 \\
 &DMLab & 51 & 55 & 57 & 58 & 60   & 51 & 54 & 56 & 59 &63 & 80\\
 &SVHN & 50 & 52 & 53 & 54 & 56& 50 & 51 & 52 & 54 & 56 & 60 \\
\hlinewd{1pt}
\end{tabular}%
}
\vspace{-1mm}
\caption{ Latency with respect to the number of prompts in VPT \cite{jia2022visual}.  We use ViT-B/16 as a baseline model.
}
\label{table:exp:GPUtime}
\end{table*}

\begin{table*}[t]
   \centering
\small
\resizebox{0.9\textwidth}{!}{%
\begin{tabular}{l|ccccc|ccccc|c}
\hlinewd{1pt}
{Dataset   } &\multicolumn{5}{c|}{{Local Prompt Condensation}}&\multicolumn{5}{c|}{{Global Prompt Condensation}}&\\
(Unit: GFLOPs) & 10\% & 20\% & 30\% & 40\% & 50\%  & 10\% & 20\% & 30\% & 40\% & 50\% & 100\% \\
\hline
Stanford Cars & 19.43 & 21.30 & 23.18 & 25.08 & 26.99  & 19.43 & 21.31 & 23.21 & 25.12 & 27.04 & 36.77\\
 DMLab & 18.50 & 19.43 & 20.36 & 21.30 & 22.24  & 18.49 & 19.43 & 20.36 & 21.30 & 22.25 & 26.99\\
SVHN & 18.04 & 18.50 & 18.97& 19.43 & 19.90  & 18.02 & 18.50 & 18.99 & 19.42 & 19.90 &  22.24\\
\hlinewd{1pt}
\end{tabular}%
}
\vspace{-1mm}
\caption{ FLOPs with respect to the number of prompts in VPT \cite{jia2022visual}.  We use ViT-B/16 as a baseline model.
}
\label{table:exp:FLOPS}
  \vspace{-2mm}
\end{table*}

\noindent\textbf{{Layer-wise prompt distribution.}}
We present the visualization of the layer-wise prompt distribution for PC with different percentages of prompts (50\%, 30\%, and 10\%) in Fig. \ref{fig:anlysis:performance_layerwise}. The average number of prompts in each layer is computed across all datasets, and the mean and standard deviation are reported. The results indicate that prompts in the early layers have a minimal impact on the accuracy for most datasets. Furthermore, reducing the percentage of prompts leads to higher standard deviation, implying that the optimal number of prompts varies across datasets. Therefore, a global PC method is necessary to determine the optimal number of prompts at each layer across different datasets.

\noindent\textbf{{GPU Latency Analysis.}}
We analyze the practical latency time of VPT with PC on GPUs.
Theoretically, the complexity of self-attention operation increases quadratically as the input token length increases.
However, this may not hold true in practice due to factors such as hardware specifications \cite{jia2022visual,dosovitskiy2020image}.
To investigate the advantage of PC on GPU latency, we measure the GPU latency for 64 images on three different GPU environments:  Quadro RTX5000, V100, and A100.
We performed experiments on three datasets with different original numbers of prompts (StanfordCar, DMLab, and SVHN datasets originally had 200, 100, and 50 prompts, respectively).
In Table \ref{table:exp:GPUtime}, we observe that the proposed PC reduces GPU latency for all configurations.
As we expected, the effectiveness of PC is higher in the case with a larger number of prompts such as Stanford Cars.
Moreover, the global PC yields similar latency as that of the local PC. This further corroborates the use of the global importance score which achieves higher accuracy with negligible computational overhead.
We measure the FLOPs of VPT with PC in Table \ref{table:exp:FLOPS} to support our observation of the GPU latency, where the results show a similar trend.

\begin{figure}[t]
\begin{center}
\centering
\def\arraystretch{0.5}
\begin{tabular}{@{\hskip 0.01\linewidth}c@{\hskip 0.01\linewidth}c@{\hskip 0.01\linewidth}c@{\hskip 0.01\linewidth}c@{}c}
\hspace{-4mm}
\includegraphics[width=0.50\linewidth]{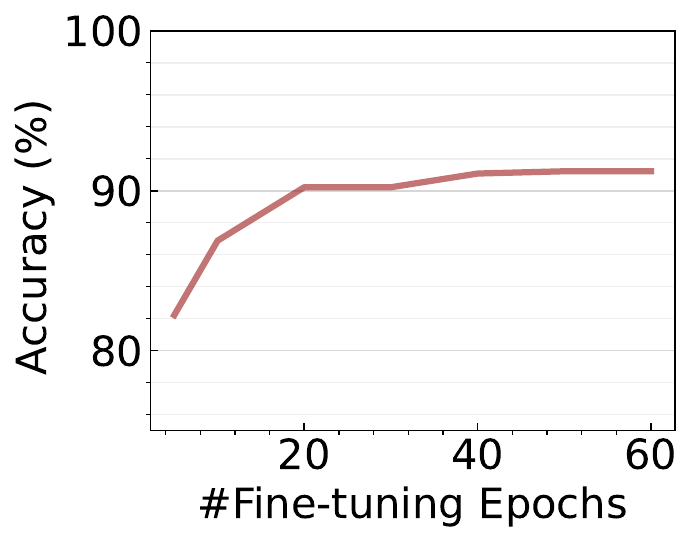} &
\includegraphics[width=0.52\linewidth]{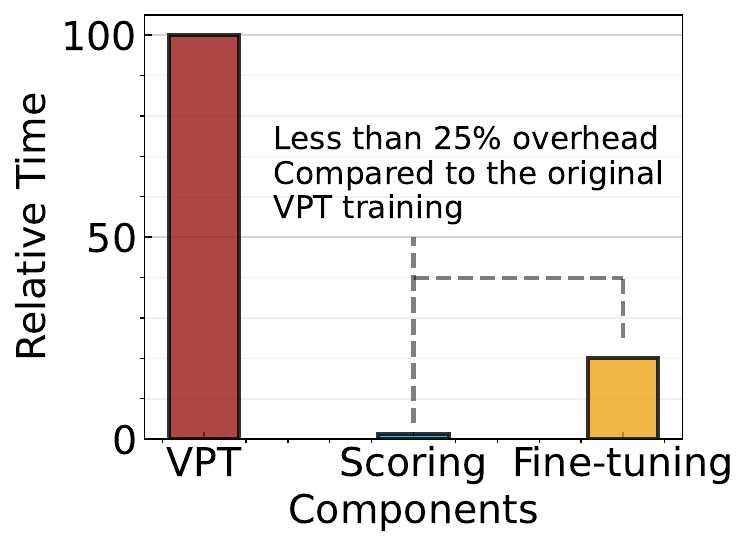}\\
\hspace{3mm}
(a) & (b)
\end{tabular}
\end{center}
\vspace{-6mm}
\caption{ 
 (a) The validation accuracy change with respect to the number of fine-tuning epochs. (b) Relative time of original VPT training, scoring, and prompt fine-tuning.}
 \vspace{-1mm}
\label{fig:exp:finetunig_analysis}
\end{figure}

\noindent\textbf{{Analysis on the Number of Fine-tuning Epochs.}}
One crucial hyperparameter in our method is the number of prompt fine-tuning epochs ($N_p$ in Algorithm 1). However, longer fine-tuning periods come with a higher computational cost, which is incompatible with on-device training scenarios. To determine the optimal number of fine-tuning epochs, we measure the average validation accuracy across all downstream datasets of VTAB-1K with 10\% prompts. As shown in Fig. \ref{fig:exp:finetunig_analysis}(a), the accuracy plateaus around epoch 20. Based on this observation, we set the number of fine-tuning epochs to 20 for all experiments.
In addition, Fig. \ref{fig:exp:finetunig_analysis}(b) illustrates the relative computational time between the original VPT training, prompt scoring, and prompt fine-tuning (line 1, line 2, and line 4 in Algorithm 1, respectively). The results demonstrate that our PC method (prompt scoring + fine-tuning) requires less than 25\% of the computational time needed for the original VPT training. These results indicate that our method is well-suited for on-device training scenarios.

\noindent\textbf{Practical Implementation of Prompt Condensation.}
In practical applications, it may not always be evident if there is a non-trivial performance drop with a small number of prompts. In such cases, we can use a relative computational cost metric (i.e., the ratio of [original image tokens] to [prompt + original image tokens]) to decide whether to apply Prompt Condensation (PC).
For instance, consider a scenario with 197 original tokens (196 + [CLS] token) and 100 prompt tokens. In this case, the addition of prompts results in a computational cost increase of $\frac{100}{197} = 50.76\%$. If the inclusion of prompts leads to an additional computational cost of $\ge K$\%, we can opt to implement PC. If not, it would be more beneficial to skip PC.


\section{Conclusion}
In this study, our aim is to investigate the influence of the number of prompts on VPT and its impact on both computational cost and fine-tuning performance. Our findings show that reducing the number of prompts by approximately 50\% does not significantly affect fine-tuned accuracy, with the majority of the performance drop occurring in the 10\% to 40\% range. Additionally, we demonstrated that increasing the number of prompts does not linearly enhance the maximum rank of approximated self-attention matrices. 
At the same time, we proposed Prompt Condensation (PC), a condensation technique that can effectively recover the performance degradation caused by using a small number of prompts. 
Overall, we hope our analysis and observations can provide insight to researchers in designing visual prompts.

{\small
\bibliographystyle{ieee_fullname}
\bibliography{egbib}
}

\end{document}